\documentclass{article}
\usepackage[square,numbers]{natbib}

\usepackage[final]{neurips_mlsb_2024}

\usepackage[utf8]{inputenc} %
\usepackage[T1]{fontenc}    %
\usepackage{hyperref}       %
\usepackage{url}            %
\usepackage{booktabs}       %
\usepackage{amsfonts}       %
\usepackage{nicefrac}       %
\usepackage{microtype}      %
\usepackage{xcolor}         %
\usepackage{amsmath}
\usepackage{graphicx}
\usepackage{multirow} %
\usepackage{wrapfig,lipsum,booktabs}
\usepackage[english]{babel}
\usepackage{amsmath}
\usepackage{amsthm}
\usepackage{cleveref}
\usepackage{float} %
\newtheorem{theorem}{Theorem}

\title{\textsc{MolMix}: A Simple Yet Effective Baseline for Multimodal Molecular Representation Learning}

\author{Andrei Manolache\textsuperscript{123} \quad Dragoș Țânțaru\textsuperscript{3} \quad Mathias Niepert\textsuperscript{12} \\
\textsuperscript{1}Computer Science Department, University of Stuttgart, Germany \\
\textsuperscript{2}International Max Planck Research School for Intelligent Systems \quad \textsuperscript{3}Bitdefender, Romania \\
\texttt{andrei.manolache@ki.uni-stuttgart.de}
}

\begin{document}

\maketitle

\begin{abstract}
	In this work, we propose a simple transformer-based baseline for multimodal molecular representation learning, integrating three distinct modalities: SMILES strings, 2D graph representations, and 3D conformers of molecules. A key aspect of our approach is the aggregation of 3D conformers, allowing the model to account for the fact that molecules can adopt multiple conformations—an important factor for accurate molecular representation. The tokens for each modality are extracted using modality-specific encoders: a transformer for SMILES strings, a message-passing neural network for 2D graphs, and an equivariant neural network for 3D conformers. The flexibility and modularity of this framework enable easy adaptation and replacement of these encoders, making the model highly versatile for different molecular tasks. The extracted tokens are then combined into a unified multimodal sequence, which is processed by a downstream transformer for prediction tasks. To efficiently scale our model for large multimodal datasets, we utilize Flash Attention 2 and bfloat16 precision. Despite its simplicity, our approach achieves state-of-the-art results across multiple datasets, demonstrating its effectiveness as a strong baseline for multimodal molecular representation learning. Our code is publicly available at \url{https://github.com/andreimano/MolMix}.
\end{abstract}

\section{Introduction and Related Work}

Accurately representing molecular structures is fundamental in computational chemistry and drug discovery \citep{xia2023systematic, WIEDER20201, app_dl_mol}. Effective molecular representations enable machine learning models to predict molecular properties, understand chemical behaviors, and accelerate the development of new compounds. Traditional molecular representation methods typically focus on a single modality, such as SMILES strings \citep{Hirohara2018ConvolutionalNN, smilesbert}, chemical fingerprints \citep{fingerprints}, 2D molecular graphs \citep{Gil+2017}, or the 3D geometry of molecules \citep{Sch+2017, Kli+2021}. While effective, these methods overlook important molecular characteristics that can be captured by other modalities \citep{descriptors_enough, challenge_reaction}. To address this, recent research has introduced multimodal approaches that integrate multiple molecular representations to provide richer representations for machine learning tasks in chemistry. \citet{pmlr-v162-stark22a} proposed an information maximization approach to enhance the mutual information between 2D and 3D molecular embeddings. Similarly, \citet{liu2022pretraining} used contrastive pre-training to align 2D and 3D representations. Other approaches extract both 2D and 3D features, such as shortest path distances and 3D distance encodings, to build multimodal models \citep{luo2023one, yu2023multimodalmolecularpretrainingmodality}. \citet{10.1145/3534678.3539368} unified 2D and 3D molecular data in a pre-training framework by predicting either masked 2D structures or 3D conformations. Additionally, language-based models have been integrated with molecular data. \citet{10.1093/bioinformatics/btae260}, \citet{xiao2024proteingptmultimodalllmprotein}, and \citet{srinivas2024crossmodallearningchemistryproperty} leveraged large-scale language models to incorporate textual descriptions of molecules, enhancing molecular property predictions.

While these representations capture certain aspects of molecular structures, they may not fully encompass the variability inherent in molecular conformations. Many molecular properties, such as solubility, toxicity, and binding affinity, are influenced by the range of conformations a molecule can adopt in nature \citep{cao2022design}. Utilizing a single geometric representation for a molecule, therefore, restricts the effectiveness of machine learning models. Moreover, identifying which conformers most significantly impact the properties of interest remains difficult. Consequently, creating comprehensive multi-modal representations that integrate multiple 3D conformations continues to be a challenge. To address this, \citet{axelrod_molecular_2023} propose a scheme where 3D conformer embeddings are first extracted using an equivariant backbone and then aggregated with an attention mechanism, but they do not train on multimodal data. Similarly, \citet{nguyen2024structure} introduce a conformer aggregation approach that leverages Optimal Transport techniques to obtain a single 3D embedding from multiple conformers, which is subsequently combined with 2D embeddings derived from a GNN. Additionally, \citet{zhu2024learning} propose MARCEL, a conformer aggregation benchmark alongside models that employ set encoders to pool conformer embeddings with 2D structures and SMILES strings for downstream tasks. These approaches illustrate ongoing efforts to develop more holistic and effective molecular representations by incorporating both 2D and multiple 3D conformational data.

Despite these advances, there remains a need for simple yet effective models that can seamlessly integrate multiple modalities and handle multiple conformers without significant computational overhead. Moreover, recent observations \citep{swallowing-the-bitter-pill, wdtgg} suggest that some model design choices might be unnecessary for strong empirical performance, thereby making the added complexity superfluous and inefficient. To address this challenge, we propose \textsc{MolMix}, a simple yet effective baseline for multimodal molecular representation learning. We employ modality-specific encoders - a transformer for SMILES strings, a GNN for 2D graphs, and equivariant neural networks for 3D conformers - to extract text and node embeddings from each modality. These embeddings are concatenated into a multimodal sequence, separated by special tokens, and fed into a downstream transformer that predicts molecular properties. By leveraging efficient techniques like Flash Attention \citep{dao2022flashattention, dao2023flashattention2} and bfloat16 precision, \textsc{MolMix} scales to handle large sequences of atom tokens with minimal computational overhead, enabling the direct incorporation of all conformers. Despite its straightforward design, \textsc{MolMix} achieves state-of-the-art results across multiple datasets, demonstrating that simplicity can be highly effective in multimodal molecular representation learning, while the modular design allows us to easily exchange the specific modality encoders.

To summarize, our main contributions are:

\begin{enumerate}
	\item \textbf{Simple multimodal molecular framework}: We introduce \textsc{MolMix}, which seamlessly combines SMILES strings, 2D molecular graphs, and multiple 3D conformers into a unified sequence for molecular representation learning.
	\item \textbf{Conformer aggregation}: By incorporating node embeddings from 3D conformers, \textsc{MolMix} effectively captures conformational variability.
	\item \textbf{Scalability}: We utilize Flash Attention and bfloat16 (bf16) precision to scale our model, enabling the processing of large multimodal datasets with minimal computational overhead.
	\item \textbf{State-of-the-Art performance}: \textsc{MolMix} achieves superior results on multiple benchmark datasets, establishing a strong baseline for future research in multimodal molecular representation learning.
	\item \textbf{Transfer learning capabilities} We show that \textsc{MolMix} could potentially be used for pre-training on large molecular datasets.
\end{enumerate}

We make our code publicly available at \url{https://github.com/andreimano/MolMix}.

\section{\textsc{MolMix}: A Multimodal Molecular Transformer}

\paragraph{1D Encoder}

We represent molecules using SMILES strings, which encode chemical structures as sequences of characters. Let \( S = [s_1, s_2, \dots, s_n] \) denote the input SMILES string, where each \( s_i \) is a character. Each \( s_i \) is mapped to an embedding vector \( \mathbf{e}_i = \text{Embedding}(s_i) \). To account for sequence order, positional encodings are added: \( \mathbf{z}_i = \mathbf{e}_i + \text{PE}(i) \). A transformer encoder \citep{Vaswani2017-tj} then processes these vectors to obtain the hidden representations

\begin{equation}
	\label{eq: smiles}
	\mathbf{h}_i^{\text{1D}} = \text{Transformer}(\mathbf{z}_i),
\end{equation}

for all \( i \in \{1, \dots, n\} \). Each hidden representation \( \mathbf{h}_i^{\text{1D}} \) corresponds to the respective input character \( s_i \), effectively capturing the contextual information about the molecule, for each character.

\paragraph{2D Encoder}
We represent molecules as graphs \( G = (V, E) \), where \( V \) is the set of atoms and \( E \) is the set of covalent bonds. Each atom \( v \in V \) and bond \( e_{uv} \in E \) are associated with initial feature vectors \( \mathbf{x}_v \) and \( \mathbf{e}_{uv} \), respectively. We use a message-passing framework with GINE \citep{xu2018how, Hu2020Strategies} as the backbone to capture the molecular graph's structural information. At each message-passing step \(j\), the hidden representation of atom \( v \) is updated as

\begin{equation}
	\label{eq: mpnn}
	\mathbf{h}_{v, j}^{\text{2D}} = \text{GINE}\left(\mathbf{h}_{v, j-1}^{\text{2D}}, \{ \mathbf{h}_{u, j-1}^{\text{2D}} \mid u \in \mathcal{N}(v) \}, \{ \mathbf{e}_{uv} \}\right),
\end{equation}

where \( \mathcal{N}(v) \) denotes the neighbors of atom \( v \). This iterative process aggregates information from neighboring atoms and bonds, enabling the model to learn graph representations. The final hidden embeddings \( \mathbf{h}_{v, j}^{\text{2D}} \) encode both local and global structural features of the molecule.

\paragraph{3D Encoder}
To leverage the three-dimensional structural information of molecules, we utilize 3D conformations represented by the spatial coordinates of each atom. Let \( V \) denote the set of atoms. Each atom \( v \in V \) is associated with a 3D coordinate \( \mathbf{r}_v \in \mathbb{R}^3 \). To extract meaningful atom embeddings that respect the geometric properties of the molecule, we employ an neural network with 3D inductive biases, such as SchNet \citep{Sch+2017} or GemNet \citep{Kli+2021}, as the backbone model. These networks process the 3D coordinates \( \{\mathbf{r}_v\}_{v \in V} \) along with the initial atom features \( \{x_v\}_{v \in V}\) and apply a cutoff function to consider interactions within a specified distance range, generating hidden embeddings

\begin{equation}
	\label{eq: 3d}
	\mathbf{h}_v^{\text{3D}} = \text{3DNetwork}(\mathbf{r}_v, \mathbf{x}_v),
\end{equation}

for all \( v \in V \). These atom embeddings \( \mathbf{h}_v^{\text{3D}} \) capture both the local geometry and the global spatial arrangement of the molecule.

\paragraph{Downstream Multimodal Transformer}
To integrate different molecular representations, we design a multimodal transformer that combines three distinct modalities. The SMILES encoder outputs token embeddings \( \mathbf{h}_i^{\text{1D}} \), where \( \mathbf{h}_i^{\text{1D}} \) corresponds to the \( i^{\text{th}} \) character in the string. From the 2D MPNN encoder, we extract atom embeddings \( \mathbf{h}_{v,j}^{\text{2D}} \) for atom \( v \) at layer \( j \). By using embeddings from all layers, the model captures both local and distant atom interactions, mitigating the oversmoothing effect common in deep GNNs. The 3D encoder provides atom embeddings \( \mathbf{h}_{v,c}^{\text{3D}} \) for atom \( v \) and conformer \( c \), encapsulating spatial geometry. We use multiple conformers by simply adding all the atom embeddings to the multimodal sequence. Modality-specific learnable encodings are added to the embeddings from each modality. These modality-enhanced embeddings are concatenated into a unified sequence, with special tokens included: a classification token \( \mathbf{h}_{\text{CLS}} \) is added at the start, and separation tokens \( \mathbf{h}_{\text{SEP}} \) are placed between modalities. The resulting input sequence is structured as

\[
	\mathbf{H} = \left[ \mathbf{h}_{\text{CLS}}, \{ \mathbf{h}_i^{\text{1D}} \}_i, \mathbf{h}_{\text{SEP}}, \left\{ \mathbf{h}_{v,j}^{\text{2D}} \right\}_{v,j}, \mathbf{h}_{\text{SEP}}, \left\{ \mathbf{h}_{v,c}^{\text{3D}} \right\}_{v,c}, \mathbf{h}_{\text{SEP}} \right].
\]

This sequence is then processed by the downstream transformer, which utilizes the self-attention mechanism to integrate and contextualize information across all modalities. After the transformer layers, the embedding corresponding to the classification token $\mathbf{h}^{out}_{CLS}$ is extracted and sent to a readout MLP to perform downstream tasks such as property prediction and molecular classification:

\begin{align}
	\label{eq: mmtf}
	\mathbf{h}^{\text{out}}_{\text{CLS}} & = \text{MultimodalTransformer}(\mathbf{H})         \\
	\label{eq: mmtf2}
	\hat{y}                              & = \text{MLP}(\mathbf{h}^{\text{out}}_{\text{CLS}})
\end{align}

We reduce memory overhead in our multimodal transformer with bfloat16 precision and Flash Attention 2 \citep{dao2023flashattention2}. See \cref{sec: fa} for details and a memory comparison with classical attention.

Since we use the same positional encoding for each \( \mathbf{h}_{v,i}^{\text{2D}} \) and \( \mathbf{h}_{v,c}^{\text{3D}} \), we maintain the permutation equivariance property of the 2D and 3D encoders. Another desirable property is for the model to preserve any invariance of the 3D encoder. Indeed, \textsc{MolMix} preserves these useful inductive biases:

\begin{theorem}
	\label{thm: invariance}
	Let $S$ be the SMILES string, $G$ be the 2D graph, and $\{c_1, \dots, c_k\}$ be a set of $k$ 3D conformers for a molecule. Let $\hat{y} = f_\theta(S, G, \{c_1, \dots, c_k\})$ be the output prediction obtained as described in \cref{eq: smiles} - \eqref{eq: mmtf2}. Let our 3D encoder be invariant to the actions of some group $\mathcal{G}$. Then $f_\theta$ is also invariant to any $T_1, \dots, T_k \in \mathcal{G}$, i.e. $f_\theta(S, G, \{T_1c_1, \dots, T_kc_k\}) = f_\theta(S, G, \{c_1, \dots, c_k\})$.
\end{theorem}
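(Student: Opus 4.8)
The plan is to trace exactly how the conformers $\{c_1,\dots,c_k\}$ enter the computation of $\hat y$ and to show that this dependence factors entirely through the outputs of the 3D encoder, which are assumed invariant. First I would observe that in \cref{eq: smiles} - \eqref{eq: mmtf2} the conformers influence the prediction only through the 3D atom embeddings $\{\mathbf{h}_{v,c}^{\text{3D}}\}_{v,c}$: the 1D tokens $\mathbf{h}_i^{\text{1D}}$ are a function of $S$ alone, the 2D tokens $\mathbf{h}_{v,j}^{\text{2D}}$ are a function of $G$ alone, and the special tokens $\mathbf{h}_{\text{CLS}},\mathbf{h}_{\text{SEP}}$ together with the modality-specific encodings are fixed parameters independent of the input geometry.

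Next I would use the fact that each conformer is encoded independently via \cref{eq: 3d}: the block of tokens $\{\mathbf{h}_{v,c_i}^{\text{3D}}\}_v$ associated with $c_i$ is obtained by applying $\text{3DNetwork}$ to the coordinates of $c_i$ only. By hypothesis the 3D encoder is invariant under $\mathcal{G}$, so for every $T_i\in\mathcal{G}$ the coordinates $T_i c_i$ produce exactly the same atom embeddings as $c_i$; applying this for each $i\in\{1,\dots,k\}$ shows that every 3D token is left unchanged when $(c_1,\dots,c_k)$ is replaced by $(T_1 c_1,\dots,T_k c_k)$. Because we apply $T_i$ to $c_i$ while keeping the correspondence, the conformer that lands in a given position of the sequence $\mathbf{H}$ is the same before and after, so no reordering of the 3D block occurs (and, since modality-specific rather than conformer-specific encodings are added, even the multiset of 3D tokens is identical). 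Hence the assembled sequence $\mathbf{H}$ in \cref{eq: mmtf} is literally identical for the two inputs.

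Finally I would conclude by noting that $\mathbf{h}^{\text{out}}_{\text{CLS}}=\text{MultimodalTransformer}(\mathbf{H})$ and $\hat y=\text{MLP}(\mathbf{h}^{\text{out}}_{\text{CLS}})$ are deterministic functions of $\mathbf{H}$ alone; since $\mathbf{H}$ is unchanged, so is $\hat y$, which is exactly the claim $f_\theta(S,G,\{T_1c_1,\dots,T_kc_k\})=f_\theta(S,G,\{c_1,\dots,c_k\})$.

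The argument is essentially a bookkeeping exercise, so the step I would be most careful about is pinning down the precise meaning of ``the 3D encoder is invariant to $\mathcal{G}$'': it must mean that the \emph{output node embeddings} are unchanged under the action of $\mathcal{G}$ on the input coordinates (as opposed to equivariant, where they would transform), since otherwise the downstream transformer would receive different tokens. With that reading fixed, the per-conformer independence of the 3D encoder together with the use of shared positional/modality encodings (already noted before the theorem) are exactly what let the invariances compose across all $k$ conformers.
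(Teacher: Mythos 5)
Your proposal is correct and follows essentially the same route as the paper's proof: both arguments note that the conformers enter the prediction only through the 3D encoder outputs, apply the per-conformer invariance $g_\theta(T_ic_i)=g_\theta(c_i)$ to conclude the token sequence fed to the downstream transformer is unchanged, and hence so is $\hat y$. (The paper additionally remarks on permutation invariance over conformer indices, which is not needed for the stated claim; your observation that no reordering occurs covers the same point.)
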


\begin{table*}[t!]
	\vspace{-3em}

	\centering
	\caption{Comparison between \textsc{MolMix} and other baselines on the \textit{Drugs-75K} and \textit{Kraken} datasets from MARCEL \citep{zhu2024learning}. \textbf{1D}, \textbf{2D} and \textbf{3D} represents training on the SMILES strings and molecule fingerprints, 2D molecular representations and 3D conformers. \textbf{Multimodal} represents training on all three modalities. The metric used is the Mean Absolute Error (MAE, $\downarrow$). \textbf{Bold} indicates the best-performing model, while \underline{underline} denotes the second-best. \textsc{MolMix} obtains the best results on 5 out of 7 properties, with second-best results obtained on two properties (Drugs-75K/$\chi$ and Kraken/BurL).}
	\label{tab: marcel}
	\resizebox{0.8\linewidth}{!}{%
		\begin{tabular}{llccccccc}
			\toprule
			                                                                       & Model           & \multicolumn{3}{c}{Drugs-75K}  & \multicolumn{4}{c}{Kraken}                                                                                                                                                                          \\
			\cmidrule(lr){3-5} \cmidrule(lr){6-9}
			                                                                       &                 & IP $\downarrow$                & EA $\downarrow$                & $\chi$ $\downarrow$            & B5 $\downarrow$                & L $\downarrow$                 & BurB5 $\downarrow$             & BurL $\downarrow$              \\
			\midrule
			\multirow{3}{*}{\makebox[0pt][c]{\rotatebox{90}{\textbf{1D}}}}         & RF              & 0.498{\tiny±0.003}             & 0.474{\tiny±0.002}             & 0.273{\tiny±0.003}             & 0.476{\tiny±0.004}             & 0.430{\tiny±0.009}             & 0.275{\tiny±0.018}             & 0.152{\tiny±0.014}             \\
			                                                                       & LSTM            & 0.478{\tiny±0.002}             & 0.464{\tiny±0.000}             & 0.250{\tiny±0.005}             & 0.487{\tiny±0.028}             & 0.514{\tiny±0.041}             & 0.281{\tiny±0.004}             & 0.192{\tiny±0.002}             \\
			                                                                       & Transformer     & 0.661{\tiny±0.002}             & 0.585{\tiny±0.003}             & 0.407{\tiny±0.001}             & 0.961{\tiny±0.081}             & 0.839{\tiny±0.043}             & 0.493{\tiny±0.037}             & 0.278{\tiny±0.021}             \\
			\midrule
			\multirow{4}{*}{\makebox[0pt][c]{\rotatebox{90}{\textbf{2D}}}}         & GIN             & 0.435{\tiny±0.003}             & 0.417{\tiny±0.003}             & 0.226{\tiny±0.002}             & 0.313{\tiny±0.026}             & 0.400{\tiny±0.034}             & 0.172{\tiny±0.003}             & 0.120{\tiny±0.004}             \\
			                                                                       & GIN+VN          & 0.436{\tiny±0.006}             & 0.417{\tiny±0.008}             & 0.227{\tiny±0.000}             & 0.357{\tiny±0.003}             & 0.434{\tiny±0.042}             & 0.242{\tiny±0.003}             & 0.174{\tiny±0.011}             \\
			                                                                       & ChemProp        & 0.460{\tiny±0.003}             & 0.442{\tiny±0.005}             & 0.244{\tiny±0.001}             & 0.485{\tiny±0.007}             & 0.545{\tiny±0.045}             & 0.300{\tiny±0.009}             & 0.195{\tiny±0.014}             \\
			                                                                       & GraphGPS        & 0.435{\tiny±0.005}             & 0.409{\tiny±0.006}             & 0.221{\tiny±0.005}             & 0.345{\tiny±0.032}             & 0.436{\tiny±0.013}             & 0.207{\tiny±0.012}             & 0.150{\tiny±0.014}             \\
			\midrule
			\multirow{6}{*}{\makebox[0pt][c]{\rotatebox{90}{\textbf{3D}}}}         & SchNet          & 0.439{\tiny±0.006}             & 0.421{\tiny±0.002}             & 0.224{\tiny±0.009}             & 0.329{\tiny±0.007}             & 0.546{\tiny±0.034}             & 0.230{\tiny±0.011}             & 0.186{\tiny±0.010}             \\
			                                                                       & DimeNet++       & 0.444{\tiny±0.009}             & 0.423{\tiny±0.007}             & 0.244{\tiny±0.008}             & 0.351{\tiny±0.011}             & 0.417{\tiny±0.040}             & 0.210{\tiny±0.016}             & 0.153{\tiny±0.007}             \\
			                                                                       & GemNet          & \underline{0.407{\tiny±0.001}} & \underline{0.392{\tiny±0.002}} & \textbf{0.197{\tiny±0.004}}    & 0.279{\tiny±0.013}             & 0.375{\tiny±0.009}             & 0.178{\tiny±0.010}             & 0.164{\tiny±0.006}             \\
			                                                                       & PaiNN           & 0.451{\tiny±0.004}             & 0.450{\tiny±0.005}             & 0.232{\tiny±0.004}             & 0.344{\tiny±0.039}             & 0.447{\tiny±0.032}             & 0.240{\tiny±0.018}             & 0.167{\tiny±0.009}             \\
			                                                                       & ClofNet         & 0.439{\tiny±0.008}             & 0.425{\tiny±0.007}             & 0.238{\tiny±0.002}             & 0.487{\tiny±0.009}             & 0.642{\tiny±0.036}             & 0.288{\tiny±0.017}             & 0.253{\tiny±0.005}             \\
			                                                                       & LEFTNet         & 0.417{\tiny±0.001}             & 0.396{\tiny±0.001}             & 0.208{\tiny±0.005}             & 0.307{\tiny±0.001}             & 0.449{\tiny±0.026}             & 0.218{\tiny±0.001}             & 0.149{\tiny±0.010}             \\
			\midrule
			\multirow{7}{*}{\makebox[0pt][c]{\rotatebox{90}{\textbf{Multimodal}}}} & SchNet          & 0.454{\tiny±0.007}             & 0.438{\tiny±0.013}             & 0.237{\tiny±0.010}             & 0.270{\tiny±0.019}             & 0.432{\tiny±0.046}             & 0.202{\tiny±0.018}             & 0.144{\tiny±0.004}             \\
			                                                                       & DimeNet++       & 0.413{\tiny±0.008}             & 0.394{\tiny±0.003}             & 0.227{\tiny±0.005}             & 0.263{\tiny±0.012}             & 0.347{\tiny±0.009}             & 0.178{\tiny±0.011}             & 0.119{\tiny±0.011}             \\
			                                                                       & GemNet          & 0.419{\tiny±0.002}             & 0.400{\tiny±0.001}             & 0.217{\tiny±0.004}             & 0.231{\tiny±0.003}             & \underline{0.339{\tiny±0.027}} & 0.159{\tiny±0.007}             & \textbf{0.095{\tiny±0.001}}    \\
			                                                                       & PaiNN           & 0.447{\tiny±0.007}             & 0.427{\tiny±0.003}             & 0.229{\tiny±0.007}             & \underline{0.223{\tiny±0.022}} & 0.362{\tiny±0.019}             & \underline{0.169{\tiny±0.011}} & 0.132{\tiny±0.009}             \\
			                                                                       & ClofNet         & 0.428{\tiny±0.006}             & 0.403{\tiny±0.002}             & 0.220{\tiny±0.007}             & 0.323{\tiny±0.002}             & 0.449{\tiny±0.005}             & 0.218{\tiny±0.019}             & 0.155{\tiny±0.004}             \\
			                                                                       & LEFTNet         & 0.417{\tiny±0.004}             & 0.395{\tiny±0.000}             & \underline{0.207{\tiny±0.002}} & 0.264{\tiny±0.013}             & 0.364{\tiny±0.035}             & 0.202{\tiny±0.003}             & 0.139{\tiny±0.001}             \\
			                                                                       & \textsc{MolMix} & \textbf{0.405{\tiny±0.002}}    & \textbf{0.379{\tiny±0.004}}    & \underline{0.206{\tiny±0.002}} & \textbf{0.191{\tiny±0.017}}    & \textbf{0.305{\tiny±0.020}}    & \textbf{0.146{\tiny±0.002}}    & \underline{0.121{\tiny±0.005}} \\
			\bottomrule
		\end{tabular}
	}
\end{table*}

\section{Experimental Setup and Results}

\begin{wraptable}{r}{8cm}
	\vspace{-1.8em}
	\caption{Comparison between \textsc{MolMix} and other approaches as reported in \citep{nguyen2024structure}. The metric used is Root Mean Squared Error (RMSE $\downarrow$). \textsc{MolMix} obtains the overall best scores, significantly improving upon the recently proposed multimodal CONAN-FGW model.}
	\label{tab:moleculenet-eval}
	\scriptsize
	\begin{tabular}{lcccc}
		\toprule
		Model           & Lipo $\downarrow$              & ESOL $\downarrow$              & FreeSolv $\downarrow$          & BACE $\downarrow$              \\ \midrule
		2D-GAT          & 1.178{\tiny±0.454}             & 1.513{\tiny±0.130}             & 2.926{\tiny±1.160}             & 1.358{\tiny±0.574}             \\
		D-MPNN          & 0.731{\tiny±0.148}             & 0.961{\tiny±0.212}             & 2.053{\tiny±0.261}             & 0.850{\tiny±0.145}             \\
		MolFormer       & 0.701{\tiny±0.110}             & 0.875{\tiny±0.249}             & 2.342{\tiny±0.212}             & 1.045{\tiny±0.145}             \\
		SchNet-scalar   & 0.839{\tiny±0.179}             & 0.820{\tiny±0.164}             & 1.268{\tiny±0.397}             & 0.850{\tiny±0.316}             \\
		SchNet-emb      & 0.767{\tiny±0.179}             & 0.797{\tiny±0.239}             & 1.260{\tiny±0.369}             & 0.832{\tiny±0.167}             \\
		ChemProp3D      & 0.695{\tiny±0.230}             & 0.825{\tiny±0.152}             & 1.419{\tiny±0.427}             & 0.903{\tiny±0.412}             \\
		CONAN           & 0.746{\tiny±0.114}             & 0.756{\tiny±0.138}             & 1.223{\tiny±0.397}             & 0.797{\tiny±0.226}             \\
		CONAN-FGW       & \underline{0.650{\tiny±0.126}} & \underline{0.727{\tiny±0.148}} & \underline{1.033{\tiny±0.288}} & \underline{0.741{\tiny±0.126}} \\

		\textsc{MolMix} & \textbf{0.614{\tiny±0.022}}    & \textbf{0.639{\tiny±0.017}}    & \textbf{0.976{\tiny±0.044}}    & \textbf{0.387{\tiny±0.041}}    \\
		\bottomrule
	\end{tabular}
	\vspace{-3em}
\end{wraptable}

In this section, we evaluate how \textsc{MolMix} improves predictive performance on real-world datasets by addressing the following questions: \textit{\textbf{Q1}) How does \textsc{MolMix}'s performance compare to other sophisticated models?; \textbf{Q2}) Does incorporating multiple modalities enhance downstream performance?; \textbf{Q3}) Are pre-trained weights beneficial for transfer learning?}

To address Question 1, we train \textsc{MolMix} on four MoleculeNet datasets \citep{Wu+2018}—\textit{Lipo}, \textit{ESOL}, \textit{FreeSolv}, and \textit{BACE}—covering various molecular properties, including physical chemistry and biophysics. Conformers are generated using the RDKit chemoinformatics package \citep{landrum2016rdkit}. We follow the same train/validation/test splits as \citep{nguyen2024structure}. We also train models on the newly introduced MARCEL benchmark \citep{zhu2024learning}, specifically the \textit{Drugs-75k} and \textit{Kraken} datasets. \textit{Drugs-75K}, a subset of the \textit{GEOM-Drugs} dataset \citep{axelrod2022geom}, contains 75,099 molecules with conformers generated by Auto3D \citep{auto3d}, and labels for ionization potentials (IP), electron affinity (EA), and electronegativity (\( \chi \)). \textit{Kraken} \citep{kraken} includes 1,552 monodentate organophosphorus ligands, with conformers generated via DFT, and labels for four 3D ligand descriptors: Sterimol B5 (B5), Sterimol L (L), buried Sterimol B5 (BurB5), and buried Sterimol L (BurL). We follow the same splits as in \citep{zhu2024learning}.

\begin{wrapfigure}{R}{8cm}
    \vspace{-2em} % Adjust this value to position tables relative to text
    \begin{minipage}[t]{8cm}
        \centering
        \caption{Modality ablation study on the Kraken dataset (MAE $\downarrow$). We keep the downstream Transformer fixed and train using a single modality or a combination of modalities. Using all three modalities obtains the best results on three out of the four properties, with the second-best results generally being obtained by a configuration that contains 3D conformers. Notably, for the \textit{buried Sterimol L} property, the best results are obtained by a 3D encoder + Transformer model, indicating that the property could mainly depend on the 3D structure.}
        \label{tab:modality_ablation}
        \scriptsize
        \begin{tabular}{lcccc}
            \toprule
		Modality & B5 $\downarrow$                & L $\downarrow$                 & BurB5 $\downarrow$             & BurL $\downarrow$              \\ \midrule
		1D       & 0.499{\tiny±0.033}             & 0.497{\tiny±0.025}             & 0.291{\tiny±0.020}             & 0.187{\tiny±0.005}             \\
		2D       & 0.258{\tiny±0.017}             & 0.347{\tiny±0.013}             & 0.176{\tiny±0.010}             & 0.141{\tiny±0.004}             \\
		3D       & 0.213{\tiny±0.008}             & 0.337{\tiny±0.009}             & 0.164{\tiny±0.004}             & \textbf{0.116{\tiny±0.003}}    \\
		1D+2D    & 0.297{\tiny±0.015}             & 0.390{\tiny±0.016}             & 0.180{\tiny±0.008}             & 0.153{\tiny±0.006}             \\
		1D+3D    & 0.209{\tiny±0.002}             & \underline{0.337{\tiny±0.010}} & 0.156{\tiny±0.013}             & 0.127{\tiny±0.004}             \\
		2D+3D    & \underline{0.202{\tiny±0.009}} & 0.356{\tiny±0.026}             & \underline{0.151{\tiny±0.004}} & \underline{0.122{\tiny±0.004}} \\
		1D+2D+3D & \textbf{0.191{\tiny±0.017}}    & \textbf{0.305{\tiny±0.020}}    & \textbf{0.146{\tiny±0.002}}    & \underline{0.121{\tiny±0.005}} \\
            \bottomrule
        \end{tabular}
        \vspace{1em} % Adjust space between the tables
        \caption{Transfer learning experiment. We select the best checkpoint of a model trained to predict the electronegativity (\(\chi\)) on the \textit{Drugs-75K} dataset. We then freeze the model and only train the last linear readout layer on the \textit{Kraken} dataset. We compare with a randomly initialized model. For all descriptors, using the pre-trained weights improve predictive performance. Note that pretraining improves both mean performance and standard deviations.}
        \label{tab:pretrain}
        \scriptsize
        \begin{tabular}{lcccc}
            \toprule
		Modality     & B5 $\downarrow$             & L $\downarrow$              & BurB5 $\downarrow$          & BurL $\downarrow$           \\ \midrule
		Random init. & 0.567{\tiny±0.010}          & 0.543{\tiny±0.020}          & 0.334{\tiny±0.004}          & 0.216{\tiny±0.003}          \\
		Pretrain     & \textbf{0.521{\tiny±0.003}} & \textbf{0.509{\tiny±0.004}} & \textbf{0.316{\tiny±0.001}} & \textbf{0.195{\tiny±0.003}} \\
            \bottomrule
        \end{tabular}
        \vspace{-3em}
    \end{minipage}
\end{wrapfigure}

\vspace{-1em}
For all experiments, we report the mean and standard deviation over five different runs. We use the same hyperparameters for the modality encoders—we encode the SMILES strings using a transformer with two layers and four attention heads, the 2D graph is processed by a GINE \citep{xu2018how} network containing 6 layers, and the 3D conformations are processed either using a SchNet \citep{Sch+2017} or a GemNet \citep{Kli+2021} model. All of the encoders have a hidden dimension of 128. The modality embeddings are then projected by a linear layer to a 512-dimensional space before we jointly processing them with a downstream Transformer network with 8 heads and 6 layers for the \textit{Lipo}, \textit{ESOL}, \textit{FreeSolv}, \textit{BACE} and \textit{Kraken}, and 12 heads with 8 layers on \textit{Drugs-75K}. We use the Schedule-Free AdamW optimizer \citep{Kin+2015, adamw, schedulefree}.

On the MoleculeNet datasets, \textsc{MolMix} obtains the overall best results, significantly improving upon the results of \citep{nguyen2024structure} on some datasets such as \textit{BACE}, as can be seen in \cref{tab:moleculenet-eval}. This highlights that our simple approach can potentially learn better multimodal representations with conformer aggregation than previously proposed methods that contain more sophisticated aggregation techniques.

On the MARCEL datasets, compared to the approaches in \citep{zhu2024learning}, we achieve the best results on five out of seven properties and second-best on the remaining two, as can be seen in \cref{tab: marcel}. This suggests that \textsc{MolMix} also consistently performs well when using physically-grounded conformer generation methods like DFT and Auto3D.

To address Question 2, \cref{tab:modality_ablation} shows that for three of the four \textit{Kraken} descriptors, training on all three modalities yields the best results, while for one property, training on the 3D modality alone performs slightly better. The results indicate that 3D tokens contribute the most to downstream performance, followed by 2D, with SMILES (1D) having the least impact. Notably, SMILES strings significantly improve performance when predicting the Sterimol L descriptor.

To answer Question 3, we select the best checkpoint from a model trained to predict electronegativity (\(\chi\)) on the \textit{Drugs-75K} dataset. We then freeze its weights and train only the final linear layer to predict descriptors on the \textit{Kraken} dataset, comparing it to a randomly-initialized model. As shown in \cref{tab:pretrain}, pre-training improves predictive performance in all cases, suggesting that with sufficient data, \textsc{MolMix} could serve as a foundation model for molecular tasks.

\section{Conclusions and Further Work}

We propose \textsc{MolMix}, a simple yet effective multimodal molecular transformer supporting conformer aggregation. \textsc{MolMix} preserves inductive biases of modality encoders and achieves state-of-the-art results across multiple datasets. We hint towards \textsc{MolMix} being able to support transfer learning, suggesting that it could be used as a molecular foundation model. Finally, we use Flash Attention and bf16 precision to handle longer sequences and multiple modalities efficiently.

We leave three open questions. First, large self-supervised VLMs excel in 0-shot prediction and fine-tuning \citep{vlm3, vlm4, vlm1, vlm2}. Exploring self-supervised pre-training for \textsc{MolMix} using signals like masked language modeling \citep{devlin-etal-2019-bert} and noise-contrastive estimation \citep{clark-etal-2020-pre} could be valuable. Second, multiple conformers without pooling may be suboptimal; token merging \citep{bolya2023token} could improve memory and runtime. Lastly, adding modalities like molecular fingerprints may enhance performance.

\subsubsection*{Acknowledgments}
AM and MN acknowledge funding by Deutsche Forschungsgemeinschaft (DFG, German Research Foundation) under Germany's Excellence Strategy - EXC 2075 – 390740016, the support by the Stuttgart Center for Simulation Science (SimTech), and the International Max Planck Research School for Intelligent Systems (IMPRS-IS). AM and DȚ acknowledge funding by the EU Horizon project ELIAS (No. 101120237).

\bibliographystyle{unsrtnat}
\bibliography{references}

\clearpage

\appendix

\makeatletter
\@addtoreset{theorem}{section}
\makeatother

\section{Proof for Theorem \ref{thm: invariance}}
\label{proof: invariance}

\begin{theorem}
	Let $S$ be the SMILES string, $G$ be the 2D graph, and $\{c_1, \dots, c_k\}$ be a set of $k$ 3D conformers for a molecule. Let $\hat{y} = f_\theta(S, G, \{c_1, \dots, c_k\})$ be the output prediction obtained as described in \cref{eq: smiles} - \eqref{eq: mmtf2}. Let our 3D encoder be invariant to the actions of some group $\mathcal{G}$. Then $f_\theta$ is also invariant to any $T_1, \dots, T_k \in \mathcal{G}$, i.e. $f_\theta(S, G, \{T_1c_1, \dots, T_kc_k\}) = f_\theta(S, G, \{c_1, \dots, c_k\})$.
\end{theorem}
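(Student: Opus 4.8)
The plan is to trace each modality through the pipeline and show that applying $T_1, \dots, T_k \in \mathcal{G}$ to the conformers leaves the input sequence $\mathbf{H}$ of the downstream transformer literally unchanged, after which invariance of $f_\theta$ is immediate because everything downstream of $\mathbf{H}$ is a fixed deterministic map.

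First I would observe that the 1D branch (\cref{eq: smiles}) is a function of $S$ alone and the 2D branch (\cref{eq: mpnn}) is a function of $G$ alone; neither depends on the conformers, so the token families $\{\mathbf{h}_i^{\text{1D}}\}_i$ and $\{\mathbf{h}_{v,j}^{\text{2D}}\}_{v,j}$, together with the fixed tokens $\mathbf{h}_{\text{CLS}}$ and $\mathbf{h}_{\text{SEP}}$, are identical for the two inputs $(S,G,\{c_1,\dots,c_k\})$ and $(S,G,\{T_1 c_1,\dots,T_k c_k\})$. The only part of $\mathbf{H}$ that could change is the 3D block $\{\mathbf{h}_{v,c}^{\text{3D}}\}_{v,c}$.

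The key step concerns that 3D block. By hypothesis the encoder $\text{3DNetwork}$ is $\mathcal{G}$-invariant, which means precisely that for every conformer $c$ and every $T \in \mathcal{G}$ the per-atom embeddings produced from the transformed conformer $Tc$ coincide with those produced from $c$ (for SchNet or GemNet this is the standard consequence of building the network only from distances, or distances and angles). Since the 3D encoder is applied to each conformer independently, replacing $c_m$ by $T_m c_m$ for each $m\in\{1,\dots,k\}$ leaves the family $\{\mathbf{h}_{v,c}^{\text{3D}}\}_{v,c}$ unchanged atom-by-atom and conformer-by-conformer; and because the \emph{same} modality-specific learnable encoding and the \emph{same} positional encoding are added to all of these tokens (as noted just before the theorem), this augmentation reintroduces no dependence on the $T_m$. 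Hence $\mathbf{H}$ is exactly the same for both inputs, and feeding it through $\text{MultimodalTransformer}(\cdot)$ in \cref{eq: mmtf} and then $\text{MLP}(\cdot)$ in \cref{eq: mmtf2} gives the same $\hat y$, which is the claim.

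I expect the only care required is bookkeeping rather than any real obstacle: one must pin down "$\mathcal{G}$-invariant 3D encoder" as invariance of the \emph{produced node embeddings} (not equivariance), and note that because the $T_m$ act on coordinates and do not permute atoms, the 3D token multiset is preserved without even invoking the permutation-equivariance discussion. It is worth stating explicitly in the write-up that the argument goes through verbatim when different $T_m$ are used for different conformers, since the 3D encoder sees each conformer in isolation.
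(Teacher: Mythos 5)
Your proposal is correct and follows essentially the same route as the paper's proof: per-conformer invariance of the 3D encoder means the 3D token block is literally unchanged, the 1D and 2D branches do not depend on the conformers, and everything downstream of the assembled sequence $\mathbf{H}$ is a fixed deterministic map. If anything, your write-up is slightly tighter --- the paper's proof also inserts a permutation-invariance step over the conformer index ($\pi \in \mathrm{Sym}(K)$) that is not needed for the statement as given, which you correctly observe can be omitted.
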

\begin{proof}
	Let $g_\theta$ be our 3D encoder network, as described in \cref{eq: 3d}. Let $V$ be the set of atoms and $\{x_v\}_{v\in V}$, $\{r_v\}_{v\in V}$ the atom features and their 3D coordinates, such that a conformer can be described as the tuple $c=(\{x_v\}_{v\in V}, \{r_v\}_{v\in V})$. We assume that $g_\theta$ is invariant to any action $T \in \mathcal{G}$, therefore we have that, for any conformer $c$,  $g_\theta(Tc)=g_\theta(c)=\mathbf{h}^{\text{3D}}$.

	Let $h_\theta$ be the downstream Transformer together with the readout layer, as described in \cref{eq: mmtf} - \eqref{eq: mmtf2}. Since we add the same learnable modality encoding to each $h^{\text{3D}}_{v,k}$, we also have that for any permutation $\pi \in Sym(K)$, we have

	\begin{align*}
		h_\theta(\{g_\theta(T_1c_1), \dots, g_\theta(T_kc_k)\}) & = h_\theta(\{g_\theta(c_1), \dots, g_\theta(c_k)\})                                                    \\
		                                                        & = h_\theta(\{ \mathbf{h}_{v,1}^{\text{3D}}, \dots, \mathbf{h}_{v,k}^{\text{3D}} \}_{v\in V})           \\
		                                                        & = h_\theta(\{ \mathbf{h}_{v,\pi(1)}^{\text{3D}}, \dots, \mathbf{h}_{v,\pi(k)}^{\text{3D}} \}_{v\in V}) \\
		                                                        & =\tilde{y},
	\end{align*}

	therefore, if when we include the 2D graph $G$ and the SMILES string $S$, we obtain $f_\theta(S, G, \{T_1c_1, \dots, T_kc_k\}) = f_\theta(S, G, \{c_1, \dots, c_k\})=\hat{y}$.
\end{proof}

\section{Qualitative attention example}
\label{sec: qual}

\begin{figure}[t!]
	\centering
	\includegraphics[width=\textwidth]{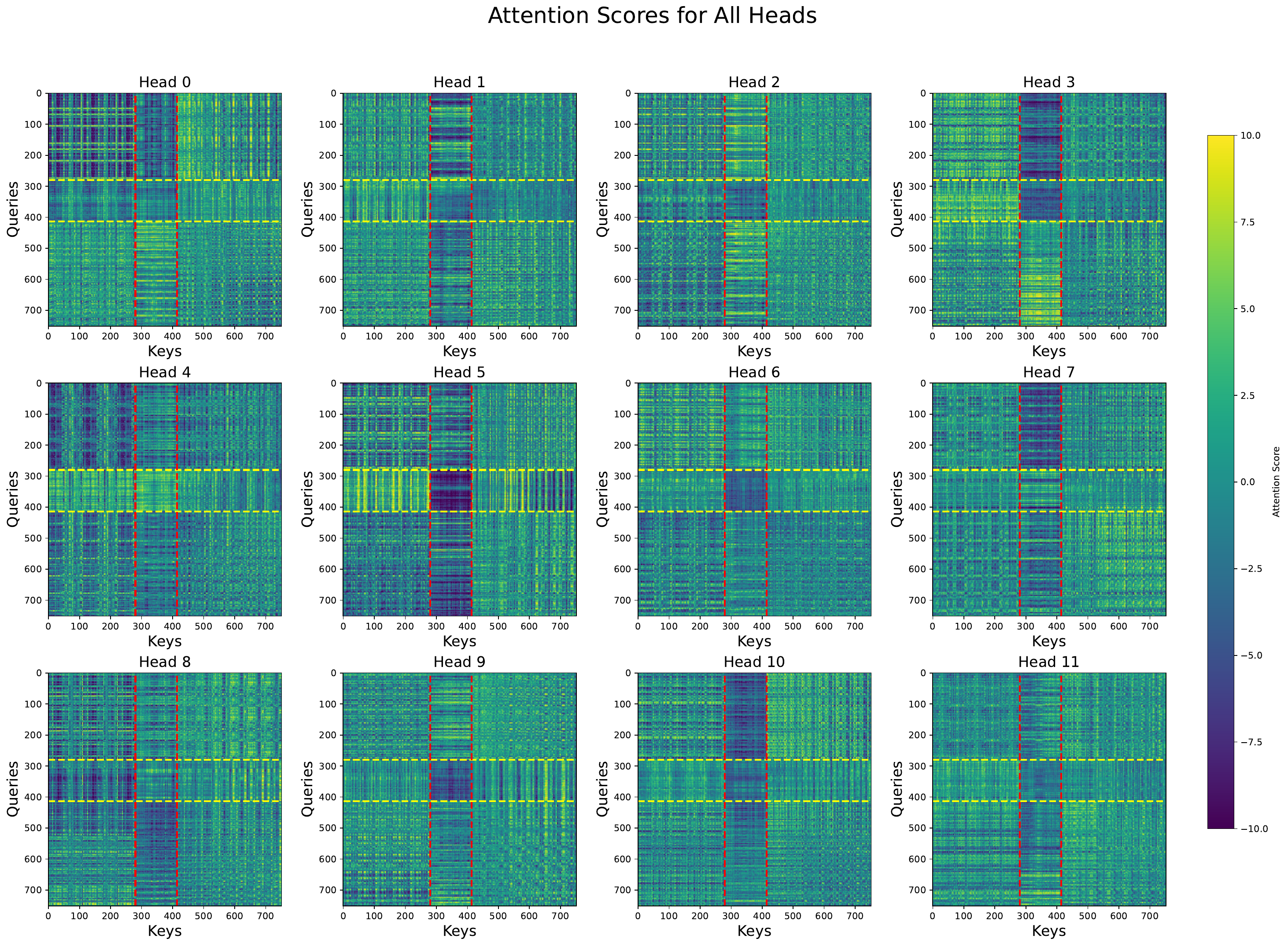}
	\caption{Red lines mark the boundaries between modalities on the key axis (with keys for each token represented by columns), while yellow lines mark the boundaries on the query axis (with queries represented by rows). The modalities are ordered as 3D, SMILES, and 2D. The attention scores are taken from the first layer of the model and clipped to the \([-10, 10]\) range.}
	\label{fig:heads_softcap}
\end{figure}

We present the attention scores for each head in a \textsc{MolMix} model trained on the Drugs-75k dataset, using a randomly sampled molecule from the dataset for visualization. As shown in Figure \ref{fig:heads_softcap}, distinct patterns emerge across the attention heads. While it remains challenging to assign a definitive interpretation to each individual head, certain sparse or dense patterns are evident in each cross-modality section. This suggests that the model is learning to extract meaningful and potentially useful features from all modalities.

\section{Attention implementation details}
\label{sec: fa}

We employ Flash Attention 2 \citep{dao2023flashattention2} for the self-attention mechanism in our models. Flash Attention 2 is a hardware-optimized implementation that significantly reduces both memory usage and runtime compared to the standard attention algorithm. It achieves these gains by leveraging GPU programming techniques, such as kernel fusion and tiling. Additionally, we utilize the \emph{varlen} implementation, which prevents unnecessary memory and compute consumption on padding tokens.

Table \ref{tab:attn_mem} presents the memory savings achieved by using Flash Attention 2 during training.

\begin{table*}[b!]
	\centering
	\caption{Comparison between fp32 standard attention and bf16 Flash Attention 2 memory usage across different models and batch sizes.}
	\label{tab:attn_mem}
	\resizebox{0.5\linewidth}{!}{%
		\begin{tabular}{lcccc}
			\toprule
			Dataset                     & \multicolumn{3}{c}{Kraken}                       \\
			\cmidrule(lr){2-4}
			Batch Size                  & 16                         & 32       & 64       \\
			\midrule
			Standard Attention (SchNet) & ~67 GB                     & OOM      & OOM      \\
			Standard Attention (GemNet) & ~80 GB                     & OOM      & OOM      \\
			Flash Attention 2 (SchNet)  & ~5.1 GB                    & ~11.6 GB & ~22.3 GB \\
			Flash Attention 2 (GemNet)  & ~22 GB                     & ~40 GB   & ~73 GB   \\
			\bottomrule
		\end{tabular}
	}
\end{table*}

\end{document}